\title[Nontransitive Paradoxes in Causality]{Omitted Labels Induce Nontransitive Paradoxes in Causality}
\newcommand{\by}{\times}
\newcommand{\indep}{\perp \!\!\! \perp}
\newcommand{\R}{\mathbb{R}}
\def\abs#1{\left| #1 \right|}
\newcommand{\Vt}{\tilde{\mathcal{V}}}
\newcommand{\vt}{\tilde{\vec{v}}}
\newcommand{\bt}{\tilde{b}}
\newcommand{\at}{\tilde{a}}
\newcommand{\At}{\tilde{A}}
\newcommand{\Wt}{\tilde{W}}
\newcommand{\wt}{\tilde{w}}
\newcommand{\mt}{\tilde{m}}% i
\newcommand{\onevec}{\vec{1^{\ell}}}
\newcommand{\co}{\text{Co}}
\newcommand{\cco}{\overline{\text{Co}}}
\newcommand{\bo}{\text{Bo}}
\renewcommand{\vec}{\bm}
\newcommand{\given}{\mid}
\newcommand{\PreserveBackslash}[1]{\let\temp=\\#1\let\\=\temp}
\newcolumntype{C}[1]{>{\PreserveBackslash\centering}p{#1}}
\newcolumntype{X}[2]{>{\columncolor{#2}\PreserveBackslash\centering}p{#1}}
\renewcommand\vec{\mathbf}
\newcommand{\YY}[1]{y^{(#1)}}
\newcommand{\Classes}{\vec{Y}}
\newcommand{\norm}[1]{\left\lVert#1\right\rVert}
\newcommand{\twonorm}[1]{\norm{#1}_{2}}
\newcommand{\Do}{\textrm{do}}
\newtheorem{observation}{Observation}
\tikzset{
    middle zigzag/.style={
        decorate,
        decoration={
            middlezigzag,
            meta-segment length=.36cm,
            segment length=0.12cm,
            amplitude = .05cm,
        }
    }
}
\newcommand{\bigcomp}{%
  \DOTSB
  \mathop{\vphantom{\sum}\mathpalette\bigcomp@\relax}%
  \slimits@
}
\newcommand{\bigcomp@}[2]{%
  \begingroup\m@th
  \sbox\z@{$#1\sum$}%
  \setlength{\unitlength}{0.9\dimexpr\ht\z@+\dp\z@}%
  \vcenter{\hbox{%
    \begin{picture}(1,1)
    \bigcomp@linethickness{#1}
    \put(0.5,0.5){\circle{1}}
    \end{picture}%
  }}%
  \endgroup
}
\newcommand{\bigcomp@linethickness}[1]{%
  \linethickness{%
      \ifx#1\displaystyle 2\fontdimen8\textfont\else
      \ifx#1\textstyle 1.65\fontdimen8\textfont\else
      \ifx#1\scriptstyle 1.65\fontdimen8\scriptfont\else
      1.65\fontdimen8\scriptscriptfont\fi\fi\fi 3
  }%
}
\def\Pr{\operatorname{Pr}}
\thanks{Work completed while a PhD student at California Institute of Technology. Partially supported by an Amazon AI4Science Fellowhip.} \Email{bijan.h.mazaheri@dartmouth.edu}\\
\thanks{Now at Ripple Medical Inc., Irvine, CA. Work completed while a postdoc at California Institute of Technology.} \Email{sid496@gmail.com}\\
\begin{document}

\maketitle

\begin{abstract}
We explore ``omitted label contexts,'' in which training data is limited to a subset of the possible labels. This setting is standard among specialized human experts or specific focused studies. By studying Simpson's paradox, we observe that ``correct'' adjustments sometimes require non-exchangeable treatment and control groups. A generalization of Simpson's paradox leads us to study networks of conclusions drawn from different contexts, within which a paradox of nontransitivity arises. We prove that the space of possible nontransitive structures in these networks exactly corresponds to structures that form from aggregating ranked-choice votes.
\end{abstract}

\section{Introduction}
Knowledge is powered and limited by the data that drives it. When seeking to understand the relevance of a study, the most critical aspect is its data's \emph{context}. Two common ``data contexts'' are (1) the population of participants and (2) interventions made on that population. Optimizing a context for utility might involve a census of the target population with perfectly focused interventions. The real world, however, weighs utility against feasibility; budget constraints limit a study's participants, and moral constraints limit its interventions.

\paragraph{Domain Adaptation}
A study may attempt to transfer conclusions from a sub-optimal \emph{data context} to a \emph{target} one. Most broadly, this topic is known as domain adaptation (DA). DA formally deals with transferring from a training probability distribution $p(\vec{V})$ to a target distribution $q(\vec{V})$. For a specific training task, we often separate the measured variables into $\vec{V} = (Y, \vec{X})$, where $Y$ is the ``label'' and $\vec{X}$ are the ``covariates.''

DA is made possible through assumptions on the shared information between $p$ and $q$. Two such assumptions are \emph{covariate shift} \citep{shimodaira2000improving} and \emph{label shift} \citep{schweikert2008empirical}. 
Another setting that can be considered to fall within DA is causal inference, which seeks to identify the effects of intervening on a variable (often a treatment) without actually performing that intervention \citep{pearl2009causality, imbens2015causal, peters2017elements}.  From a DA perspective, our data-context is ``observational'' and our target context is ``interventional,'' i.e. $q(\vec{V}) = p(\vec{V} \given \Do (T = t))$ for $T \in \vec{V}$.

The best way to learn causal effects is to perform interventions in a clinical ``randomized controlled trial'' (RCT). RCTs utilize a random partition into treatment and control groups, which ensures that no potentially relevant factors (like the severity of a disease) can affect the probability of receiving treatment. This ``exchangeability'' between treatment and control forms a key principle of the interventional distributions that causal inference aims to emulate. That is, an intervened variable's value (such as $\Do(T=t)$) contains no information about that variable's usual causes.

A common technique within DA is to transform the data's distribution to the target context through reweighting. These weights make up for poorly represented portions of a data distribution by giving them more importance. For label shift, this corresponds to weighting data by the ratio of label probabilities ($w(\vec{x}, y) = q(y) / p(y)$). Similarly, covariate shift can be accounted for using a ratio of covariate probability densities ($w(\vec{x}, y) = q(\vec{x}) / p(\vec{x})$). Causal inference performs similar reweightings to transform observational data into distributions with exchangeable treatment and control groups, e.g. ``inverse probability weighting'' \citep{imbens2015causal, cole2008constructing, hernan2020whatif}. Covariate adjustment techniques like the ``backdoor adjustment'' \citep{pearl2009causality} and ``G-computation'' \citep{robins2009longitudinal} perform adjustments at the distribution level that can nonetheless be thought of as reweighting data. 

\paragraph{Omitted Label Contexts}
This paper will discuss a relatively new branch of the DA tree introduced by \citet{mazaheri2021synthesizing}, which we will call ``omitted label contexts.'' Such settings are limited to \emph{only a subset} $\mathcal{Y}^* \subset \mathcal{Y}$ of the labels for $Y \in \mathcal{Y}$. For example, ``dogs vs. cats'' is omitted label context (omitting all other animals),  but ``dogs vs. non-dogs'' is not. While the relative probabilities of classes within this subset are maintained, data from all other labels are unobserved. More precisely, $p(y^*_1)/ p(y^*_2) = q(y^*_1)/ q(y^*_2)$ for $y*_1, y*_2 \in \mathcal{Y}^*$, but $p(y') = 0$ if $y' \not \in \mathcal{Y^*}$. Within the scope of this paper, we will restrict our focus to $\abs{\mathcal{Y}^*} = 2$.

Omitted label contexts are motivated by a few real-life scenarios within medicine and epidemiology. Omitted label contexts are extremely common in the study of rare conditions. For example, a census genome sequencing of the US population would be an impractical and financially infeasible task. Instead, databases like TCGA \citep{tomczak2015review} allow focused access to patients with specific (and often rare) cancers by omitting the ``healthy label''. In study designs, investigators may opt for an omitted label context (e.g. breast vs lung cancer) or induce further label shift by working with a uniform distribution on the labels of interest. In these cases, it is useful to determine a notion of consistency. Though less related to the issues discussed in this paper, ``immortal time bias'' \citep{suissa2008immortal} can also be thought of as an omitted label context in which patients are excluded due to death before the end of the study.

Most broadly, omitted labels are a form of sampling bias --- a topic that has been studied in detail within the causal inference literature \citep{correa2019identification}. \citet{bareinboim2015recovering} calls a causal effect ``recoverable'' if it can be computed in the presence of a selection mechanism.
An important difficulty within omitted label contexts is that they are ``irreversible.'' Zero-probability labels cannot be ``weighted-up'' to transform the distribution to that of the general population. With respect to covariate adjustments, this makes the causal effects unrecoverable. 

\subsection{Summary}
This paper will begin by illustrating the potentially severe consequences of omitted label bias, particularly with respect to unrecoverable causal effects. We consider what happens when these effects are ignored, demonstrating that omitted labels can give rise to reversals in calculated treatment effects. This paradox is driven by the non-exchangeability of ``correct'' causal inference weights in omitted label contexts, i.e., standard adjustments to exchangeability give us incorrect quantities.

\paragraph{Contributions}
The problems that arise within omitted label contexts are unsurprising --- computing (what is known to be) an unrecoverable causal effect is impossible. However, within the details of these failures emerges a structure between the conclusions of multiple studies with \emph{different} restrictions on their labels. Specifically, a ``new'' \footnote{The paradox is very similar to the one introduced in \citet{mazaheri2021synthesizing} but has not been observed for causality of covarite shift.} paradox manifests within the combination of erroneous causal conclusions. We will link this paradox to a 200-year-old observation from social choice theory known as the ``Condorcet paradox,'' which demonstrates how ranked-choice votes (i.e., a preference ordering of the candidates for each voter) can result in a cycle of aggregate preference \citep{nicolas1785essai}. In a surprising connection between social choice theory, causality, and machine learning, we prove that these phenomena are one and the same.

The task of combining conclusions from different models is sometimes referred to as decision fusion \citep{castanedo2013review}. The usual reasons for studying decision fusion involve lack of access to data or inability to combine datasets (e.g., because of non-overlapping covariates). As advances in large models (such as LLMs) transition us from data-based to model-based vehicles of information, studying how to fuse decisions from multiple models will be of growing importance.

\paragraph{Outline} Section~\ref{sec: Simpsons Paradox} begins by explaining Simpson's paradox through an example that will be expanded on throughout the paper. Section~\ref{sec: omitted label contexts} studies omitted label contexts and shows how and why they can exhibit Simpson's paradox in causality. Section~\ref{sec:networks of contexts} introduces Condorcet's paradox and the linear ordering polytope and shows their relationship to networks of conclusions from different omitted label contexts. Section~\ref{sec:discussion} concludes the paper and discusses its implications.

\paragraph{Notation}
\newcommand{\onenorm}[1]{\norm{#1}_1}
\newcommand{\indicator}[1]{\mathbbm{1}[#1]}
\newcommand{\indvec}[1]{\vec{1_{\ell}}[#1]}
\newcommand{\simplex}{\triangle}
\newcommand{\sit}{G^{\vec{P}}_u}
\newcommand{\reg}{G^{\vec{P}}_d}
\renewcommand{\Classes}{\mathcal{Y}}
In general, we will use the capital Latin alphabet (i.e., $X, Y, T$) to denote random variables, with $Y$ being the ``label'' or class we wish to predict or determine the causal effect on, $X$ being the covariates, and $T$ being the treatment. The lowercase Latin alphabet will denote assignments to these variables, e.g., $x^{(1)}$ means $X = x^{(1)}$. Vectors and sets of random variables will be in bold-face font, while other types of sets will use caligraphic font (e.g., $\mathcal{Y} = \{y^{(1)}, y^{(2)}, \ldots \}$).  The following notation is used throughout the paper:
\begin{itemize}
    \itemsep0em 
    \item $\Pr(\cdot)$ will be used for probability. $p(\cdot)$ and $q(\cdot)$ will also be used when discussing DA.
    \item $\onevec$ denotes an all $1$ vector of size $\ell$.
    \item $\triangle^\ell$ denotes the $\ell$-dimensional simplex. That is, $\lambda \in \triangle^\ell$ iff $\lambda \in [0, 1]^\ell$ and $\vec{1}^\top \lambda = 1$.
    \item $\prec$ and $\preceq$ denote element-wise inequality. For example, we say $\vec{w} \preceq \vec{v}$ if $w_i\leq v_i ~\forall~ i\in[\ell].$
    \item We will use $\co(S)$ to denote the open convex hull of $S$, $\cco(S)$ to denote the closed convex hull, and $\bo(\cdot)$ to denote the boundary.
\end{itemize}

\section{Simpson's Paradox} \label{sec: Simpsons Paradox}
We will ease into our dissection of errors in causal quantities by discussing Simpson's paradox. While these observations are well-studied, we present them here in a form that is generalized to multiple labels in the following section. This discussion will rely on hypothetical observational data on a treatment $T$ and its outcome $Y$, given in Table~\ref{tab: simpsons paradox} (left). To map these quantities to a real-world example, suppose $X$ indicates the severity of a disease. Severe cases receive treatment at higher rates (and recover at lower rates) than mild ones. In aggregate, treated patients have lower rates of improvement than untreated patients.

\begin{table}
\centering
        \scalebox{.8}{\begin{tabular}{|c|c|c|c|}
        \hline
        $T$ & $X$ & $y^{(0)}$ & $y^{(1)}$ \\
        \hline
        $t^{(0)}$ & $x^{(0)}$ & $3$ & $7$ \\
        \hline
        $t^{(0)}$ & $x^{(1)}$ & $1$ & $0$ \\
        \hline
        $t^{(1)}$ & $x^{(0)}$ & $0$ & $1$ \\
        \hline
        $t^{(1)}$ & $x^{(1)}$ & $7$ & $3$ \\
        \hline
\end{tabular}}\hspace{1cm}\scalebox{.8}{\begin{tabular}{|c|c|c|c|}
        \hline
        $T$ & $X$ & $y^{(0)}$ & $y^{(1)}$ \\
        \hline
        $t^{(0)}$ & $x^{(0)}$ & $3$ & $7$ \\
        \hline
        $t^{(0)}$ & $x^{(1)}$ & $\mathbf{10}$ & $\mathbf{0}$ \\
        \hline
        $t^{(1)}$ & $x^{(0)}$ & $\mathbf{0}$ & $\mathbf{10}$ \\
        \hline
        $t^{(1)}$ & $x^{(1)}$ & $7$ & $3$ \\
        \hline
\end{tabular}}  
\caption{The left table shows a specification of counts for Simpson's Paradox. On the right, the 2nd and 3rd rows have been weighted up so that both the treated ($t^{(1)}$) and untreated ($t^{(0)}$) groups have a (weighted) distribution on $X$ that matches the marginal distribution, which is uniform (all rows must sum to $10$).}
\label{tab: simpsons paradox}
\end{table}
\begin{table}
\centering
            \scalebox{.8}{\begin{tabular}{|c|c|c|c|c|}
        \hline
        $T$ & $X$ & $y^{(0)}$ & $y^{(1)}$ & $y^{(2)}$\\
        \hline
        $t^{(0)}$ & $x^{(0)}$ & $3$ & $7$ & $0$ \\
        \hline
        $t^{(0)}$ & $x^{(1)}$ & $1$ & $0$ & $99$ \\
        \hline
        $t^{(1)}$ & $x^{(0)}$ & $0$ & $1$ & $99$ \\
        \hline
        $t^{(1)}$ & $x^{(1)}$ & $7$ & $3$ & $0$\\
        \hline
\end{tabular}}\hspace{1cm}\scalebox{.8}{\begin{tabular}{|c|c|c|c|c|}
        \hline
        $T$ & $X$ & $y^{(0)}$ & $y^{(1)}$ & $y^{(2)}$\\
        \hline
        $t^{(0)}$ & $x^{(0)}$ & $\mathbf{30}$ & $\mathbf{70}$ & $\mathbf{0}$ \\
        \hline
        $t^{(0)}$ & $x^{(1)}$ & $1$ & $0$ & $99$ \\
        \hline
        $t^{(1)}$ & $x^{(0)}$ & $0$ & $1$ & $99$ \\
        \hline
        $t^{(1)}$ & $x^{(1)}$ & $\mathbf{70}$ & $\mathbf{30}$ & $\mathbf{0}$\\
        \hline
\end{tabular}}
            \caption{On the left we give an augmentation of Table~\ref{tab: simpsons paradox} with a third column. Now, the 1st and 4th rows must be weighted up in order to balance the distribution of $X$ (all rows must sum to $100$).}
            \label{tab: augmented simpsons}
\end{table}
\begin{table}
\centering
            \scalebox{.8}{\begin{tabular}{|c|c|c|c|c|}
        \hline
        $T$ & $X$ & $y^{(0)}$ & $y^{(1)}$ & $y^{(2)}$\\
        \hline
        $t^{(1)}$ & $x^{(0)}$ & $2$ & $ 1$ & $0$\\
        \hline
        $t^{(1)}$ & $x^{(1)}$ & $0$ & $2$ & $1$ \\
        \hline
        $t^{(1)}$ & $x^{(2)}$ & $1$ & $0$ & $2$ \\
        \hline
        $t^{(0)}$ & $x^{(0)}$ & $0$ & $1$ & $2$\\
        \hline
        $t^{(0)}$ & $x^{(1)}$ & $2$ & $0$ & $1$ \\
        \hline
        $t^{(0)}$ & $x^{(2)}$ & $1$ & $2$ & $0$ \\
        \hline
\end{tabular}}
            \caption{A specification of counts that mimics Condorcet's paradox.}
            \label{tab: condorcet paradox}
\end{table}

The driver for Simpson's paradox is the difference in severity between those who did and did not receive treatment and the effect of this severity on patient outcomes. That is to say, the treatment and control groups are not exchangeable. To remedy this, we can reweight the rows of our table to exchangeability by emphasizing the severely ill patients who did not receive treatment and the mildly ill patients who did receive treatment. This is accomplished by reweighting datapoints $(t, x, y)$ according to the inverse probability/propensity of receiving the treatment that they got, $w(t, x, y) = 1/\Pr(t \given x)$, sometimes referred to as ``Inverse Probability Weighting'' (IPW)  \citep{imbens2015causal}. In our example, IPW weights up the second and third rows by a factor of $10$, as shown in Table~\ref{tab: simpsons paradox} (right). When this reweighting is interpreted as a synthetic study on $40$ participants ($20$ treated and $20$ control, each with a $10:10$ split on severity), the new apparent treatment effect is $\nicefrac{13}{20} - \nicefrac{7}{20} = 30\%$.

An alternative perspective is that the causal effect of the treatment lies in the outcome changes \emph{within each severity group}. By separately considering the severe and mild patients, we can average outcomes according to the marginal probability distribution of severity. Following this intuition, the ``backdoor adjustment'' \citep{pearl2009causality} calculates the probability distribution of $Y=y^{(i)}$ under an intervention of $T=t^{(j)}$:
\begin{equation*}
    \Pr(y^{(i)} \given \Do(t^{(j)})) \coloneq \sum_x \Pr(x) \Pr(y^{(i)} \given x, t^{(j)}).
\end{equation*}

The difference between the two possible interventions gives the ``average treatment effect'' (ATE),
\begin{equation*}
    \text{ATE} = \Pr(y^{(1)} \given \Do(t^{(1)})) - \Pr(y^{(1)} \given \Do(t^{(0)})) = \frac{\nicefrac{1}{1} + \nicefrac{3}{10}}{2} - \frac{\nicefrac{7}{10} + \nicefrac{0}{3}}{2} = .3.
\end{equation*}
Notice that the marginal probability distribution of $X$ is uniform, corresponding to an equal weighting of the $x^{(0)}$ and $x^{(1)}$ rows in Table~\ref{tab: simpsons paradox}. In fact, both IPW and backdoor approaches result in the same weightings of the rows of the table because $\Pr(t, x)/\Pr(t \given x) = \Pr(x)$.

Simpson's paradox has been the subject of a long list of works to which it would be impossible to do full justice. \citet{pearl2022comment} and \citet{hernanSimpsons} describe Simpson's paradox as ``solved'' by causal modeling. We will focus on one key takeaway: the choice of how to re-weight sub-cases (rows of our table) plays a key role in the conclusion of a study, sometimes reversing the apparent relationship (as in Simpson's Paradox).

An important observation is that there is a geometry to the way in which these errors occur. Notice that the reversal in Table~\ref{tab: simpsons paradox} would be increased by further increasing the probability of rows 1 and 4, e.g., by changing the $3,7$ counts to $300, 700$. This reweighting strengthens the dependence of $T$ on $X$. While we will not dive further into the geometry of Simpson's paradox, its existence motivates the structures we will study within networks of contexts.

\section{Omitted Label Contexts} \label{sec: omitted label contexts}
Now that we understand the potential effects of reweighting distributions on covariates, we will move our focus to the study of omitted label contexts. As discussed in the introduction, omitted label contexts involve the removal of some labels while preserving the relative probabilities of the non-removed labels. This removal can shift the apparent distribution of any variable that is associated with $Y$, including both treatment $T$ and covariates $X$.

\subsection{Causality within Omitted Label Contexts}
Consider a second hypothetical dataset that augments Table~\ref{tab: simpsons paradox} with an additional column, shown in Table~\ref{tab: augmented simpsons}. When $y^{(2)}$ is excluded, the observed dataset is equivalent to Table~\ref{tab: simpsons paradox}, which we recall has an ATE of $.3$ on the outcome of $y^{(1)}$ (relative to $y^{(0)}$). 

Although the full context has the exact same (uniform) marginal probability distribution on $X$, the inclusion of the $y^{(2)}$ column now requires that we weigh up the 1st and 4th rows (rather than the 2nd and 3rd rows). See the right side of Table~\ref{tab: augmented simpsons} for the resulting (weighted) counts. When we use these weights to compute the ATE on $y^{(1)}$, we see a reversal of the treatment effect:
\begin{equation}
    \text{ATE} = \frac{\nicefrac{1}{100} + \nicefrac{3}{10}}{2} - \frac{\nicefrac{7}{10} + \nicefrac{0}{100}}{2} = -.195.
\end{equation}

This can be understood by graphically modeling the selection bias as in \citet{bareinboim2015recovering}, shown in Figure~\ref{fig: exchangeability violation diagram}. Figure~\ref{fig: exchangeability violation diagram} (a) shows the graph describing a confounding variable $X$ causing both $T$ and $Y$. The goal of IPW and the backdoor adjustment is to reweigh the distribution to fit the DAG in Figure~\ref{fig: exchangeability violation diagram} (b), i.e. the distribution of $X$ is exchangeable in both $t^{(0)}$ and $t^{(1)}$ or equivalently $X \indep T$. Figure~\ref{fig: exchangeability violation diagram} (c) shows the effect of restricting the labels of $Y$ within a dataset context (such as with omitted label contexts), which involves conditioning on a child of $Y$. That is, let $C$ be an indicator for $Y$'s membership in the restricted set of labels and condition on $C=1$ to induce omitted label bias. $X$ and $T$ are not d-separated\footnote{see \citet{pearl2009causality} for more information on d-separation.} because conditioning on a variable that is causally downstream of both $X$ and $T$ can induce a spurious correlation between them.

\begin{figure}[t]
 \centering
  (a)\scalebox{.4}{
 \begin {tikzpicture}[-latex ,auto ,node distance =2 cm and 2 cm ,on grid , ultra thick, state/.style ={ ultra thick, circle, draw, minimum width =.85 cm}, cstate/.style ={ fill=black, text=white}]
  \node[state] (X) {$X$};
  \node[state] (T)[below left=of X] {$T$};
  \node[state] (Y)[below right=of X] {$Y$};
  \path[very thick] (X) edge (Y) (X) edge (T) (T) edge (Y);
 \end{tikzpicture}
 }
 \hfill(b)\scalebox{.4}{
 \begin {tikzpicture}[-latex ,auto ,node distance =2 cm and 2 cm ,on grid , ultra thick, state/.style ={ ultra thick, circle, draw, minimum width =.85 cm}, cstate/.style ={ fill=black, text=white}]
  \node[state] (X) {$X$};
  \node[state] (T)[below left=of X] {$T$};
  \node[state] (Y)[below right=of X] {$Y$};
  \path[very thick] (X) edge (Y) (T) edge (Y);
 \end{tikzpicture}
 }
 \hfill(c)\scalebox{.4}{
 \begin {tikzpicture}[-latex ,auto ,node distance =2 cm and 2 cm ,on grid , ultra thick, state/.style ={ ultra thick, circle, draw, minimum width =.85 cm}, cstate/.style ={ fill=black, text=white}]
  \node[state] (X) {$X$};
  \node[state] (T)[below left=of X] {$T$};
  \node[state] (Y)[below right=of X] {$Y$};
  \node[state, cstate] (C)[right=of Y] {$C$};
  \path[very thick] (X) edge (Y) (T) edge (Y) (Y) edge (C);
 \end{tikzpicture}
 }
 \caption{(a) A causal DAG depicting confounding from a common cause $X$. (b) The causal DAG that ``severs'' $X \rightarrow T$ by reweighting for exchangeability. (c) The causal DAG depicting the effect of an omitted label context $C$, which has been conditioned on. } \label{fig: exchangeability violation diagram} 
\end{figure}
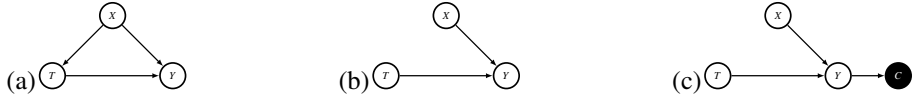

This effect darkens the outlook for causal inference in omitted label contexts, particularly because the globally correct reweighting of Table~\ref{tab: augmented simpsons} involves counts that do not appear exchangeable when we omit the $y^{(2)}$ column. We are no longer guided by the principle of exchangeability in our observed data. Finding the correct adjustment requires knowledge of the context-induced distribution  $\Pr(X, T, Y \given y \in \{y^{(i)}, y^{(i)}\})$. This distribution cannot be obtained without extending the study to all labels.

In response to this difficulty, we will study what can be learned when we have many \emph{different} omitted label contexts. These contexts do not provide access to the full joint distribution that would be needed to correctly compute a backdoor adjustment. However, the aggregation of these contexts provides partial information that can be formalized.

\section{Networks of Contexts} \label{sec:networks of contexts}
Before we discuss the structures within networks of omitted label contexts, we will introduce another paradox from social choice theory: the Condorcet Paradox \citep{nicolas1785essai}. We will see that this paradox and its structure are deeply related to networks of conclusions drawn from omitted label contexts.
\subsection{The Condorcet Paradox}
The Condorcet paradox works as follows: three voters each have preferences $y^{(0)} \rightarrow y^{(1)} \rightarrow y^{(2)}$, $y^{(1)} \rightarrow y^{(2)} \rightarrow y^{(0)}$, and $y^{(2)} \rightarrow y^{(0)} \rightarrow y^{(1)}$, with $a \rightarrow b$ indicating a preference of $a$ over $b$. These preferences are generated by rotating a starting ordering three times, meaning that $y^{(i)} \rightarrow y^{(i + 1 \mod 2)}$) in two out of the three voters. The result is an aggregate cycle of preference $y^{(0)} \rightarrow y^{(1)} \rightarrow y^{(2)} \rightarrow y^{(0)}$ with frequencies of $\nicefrac{2}{3}$ voters for each edge.

This paradox can be generalized into what we will call an ``aggregation of rankings'' (AR) --- a complete directed-graph\footnote{These graphs are always complete, but we use graph terminology as in \citet{mazaheri2021synthesizing} in order to reference properties that are dependent on cycles.} on the set of labels $\mathcal{Y}$ with weights on each $y^{(i)} \rightarrow y^{(j)}$ corresponding to the fraction of voters who prefer $y^{(i)}$ to $y^{(j)}$. AR structures are a convex combination of total orderings (i.e., graphs with edge weights of $0$ or $1$), with component weights corresponding to the fraction of voters carrying each total ordering. See Figure~\ref{fig:situational_LOG_decomp} for an illustration of this perspective for the Condorcet paradox. As a result, the space occupied by all possible AR structures is known as the ``linear ordering polytope,'' which has been the subject of extensive study \citep{fishburn1992induced, alon2002voting}.

\newcommand{\ranker}[7]{\scalebox{.8}{\begin {tikzpicture}[-latex, auto,node distance =1.4cm and 1.4cm ,on grid ,
semithick, baseline, anchor=base]
\node[circle, draw, fill=#4, minimum width =.4 cm, inner sep=0.05pt, label= below:$\YY{#1}$] (a){};
\node[circle,draw, fill=#5,minimum width =.4 cm, inner sep=0.05pt,label= below:$\YY{#2}$] (b) [right = of a]{};
\node[circle,draw, fill=#6, minimum width =.4 cm, inner sep=0.05pt,label= below:$\YY{#3}$] (c) [right = of b]{};
\node (lab) [above = 1 cm of b] {#7};
\path (a) edge [bend left = 0]  (b);
\path (b) edge [bend left = 0]  (c);
\path (a) edge [bend left = 25] (c);
\end{tikzpicture}}}

\newcommand{\cycle}[6]{\scalebox{.7}{\begin {tikzpicture}[-latex, auto,node distance =.75cm and 1 cm ,on grid ,
semithick, baseline, anchor=base, state/.style ={ circle, draw, minimum width =.4 cm, fill}]
\node[state, fill = #4] (A) at (90:1) {};
\node[state, fill = #5] (B) at (330:1) {};
\node[state, fill = #6] (C) at (210:1) {};
\node[] (Alab) at (90:1.4) {$\YY{1}$};
\node[] (Blab) at (330:1.7) {$\YY{2}$};
\node[] (Clab) at (210:1.7) {$\YY{3}$};
\node[] (CAout) at (150:.8) {$#3$};
\node[] (BCout) at (270:.9) {$#2$};
\node[] (ABout) at (30:.8) {$#1$};
\path (A) edge (B);
\path (B) edge (C);
\path (C) edge (A);
%\path (u) edge [bend right = 0] node[above] {$0.7$} (z);
\end{tikzpicture}}}

\begin{figure*}
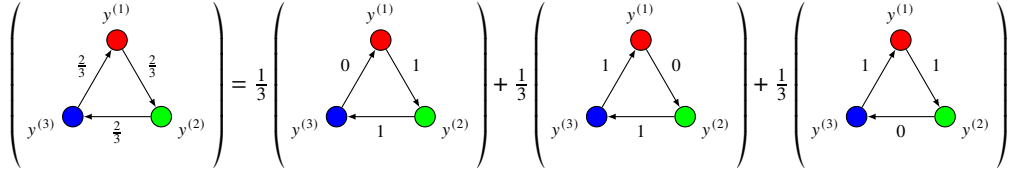

    \centering
\begin{equation*}
\resizebox{.95 \textwidth}{!}{$
    \left(\cycle{\frac{2}{3}}{\frac{2}{3}}{\frac{2}{3}}{red}{green}{blue}\right) = \frac{1}{3}\left(\cycle{1}{1}{0}{red}{green}{blue}\right) + \frac{1}{3}\left(\cycle{0}{1}{1}{red}{green}{blue}\right) + \frac{1}{3}\left(\cycle{1}{0}{1}{red}{green}{blue}\right)$}
\end{equation*}
    \caption{The Condorcet paradox as an aggregation of rankings.}
    \label{fig:situational_LOG_decomp}
\end{figure*}

The preferences of voters in the Condorcet paradox can be embedded into a table of frequencies, with each voter becoming a specific value for covariate $X$. Table~\ref{tab: condorcet paradox} demonstrates this using the counts $2 > 1 > 0$ to induce high, medium, and low preference. Notice that the order of preferences for each $x$ in the $t^{(1)}$ half of the table (first three rows) exactly correspond to the order of preferences given by the voters in the Condorcet paradox, starting with $y^{(0)} \rightarrow y^{(1)} \rightarrow y^{(2)}$ for $x^{(0)}$ and shifting the order for the other values of $X$. 

The $t^{(0)}$ half of the table complements the $t^{(1)}$ half so that the counts for $(t^{(0)}, x^{(i)}, y^{(j)})$ and $(t^{(1)}, x^{(i)}, y^{(j)})$ always sum to three. As a result, restricting our table to any two columns still yields a uniform probability distribution on $X$, i.e. $\Pr(x^{(0)} \given y \in \{y^{(i)}, y^{(j)}\}) = \Pr(x^{(1)} \given y \in \{y^{(i)}, y^{(j)}\}) = \Pr(x^{(2)} \given y \in \{y^{(i)}, y^{(j)}\})$. This is the distribution that a naive study would average over when applying a backdoor adjustment, meaning that
\begin{equation}\label{eq:interventional distributions}
\begin{aligned}
    \Pr(y^{(0)} \given \Do(t^{(1)}), y \in \{y^{(0)}, y^{(1)}\}) &= \frac{\nicefrac{2}{3} + \nicefrac{0}{2} + \nicefrac{1}{1}}{3} = \nicefrac{5}{9}\\
    \Pr(y^{(0)} \given \Do(t^{(0)}), y \in \{y^{(0)}, y^{(1)}\}) &= \frac{\nicefrac{0}{3} + \nicefrac{2}{2} + \nicefrac{1}{3}}{3} = \nicefrac{4}{9}.\\
\end{aligned}
\end{equation}
The calculations in Equation~\ref{eq:interventional distributions} conclude that the ATE on $y^{(0)}$ in the $y\in \{y^{(0)}, y^{1)}\}$ context is $+ \nicefrac{1}{9}$. These calculations are the same for the ATE on $y^{(1)}$ for $y\in \{y^{(1)}, y^{(2)}\}$ and the ATE on $y^{(2)}$ for $y\in \{y^{(2)}, y^{(0)}\}$ due to the cyclic shifting of columns. Hence, the studies separately conclude that the treatment increases the relative frequency of all three labels, which is clearly impossible.
The embedding of the Condorcet paradox into causal conclusions implies a correspondence between aggregations of rankings and backdoor adjustments (or any other case-based weighting).

\paragraph{Cardinal vs Ordinal Systems}
The Condorcet paradox is primarily driven by the loss of information in an ordinal system. That is, an ordering of $A \rightarrow B \rightarrow C$ cannot distinguish between the magnitude of the preferences $A \rightarrow B$ and $A \rightarrow C$. If voters were instead allowed to allocate multiple votes among the candidates (e.g. one voter gives 2 votes for $A$, 1 for $B$, and 0 for $C$), it would be clear that $A \rightarrow C$ is a stronger opinion than $A \rightarrow B$ (that is, $2-1 < 2-0$). Indeed, such ``cardinal'' voting systems do not give rise to paradoxes of transitivity \citep{conklin1923comparison}.

Consider mapping the paradox we have described here to a weighted DAG structure. The edge-weight on $y^{(i)} \rightarrow y^{(j)}$ corresponds to a do-intervention that has been computed on a context that omits all labels other than $y^{(i)}, y^{(j)}$. Since these edge-weights are constructed using ordinal values (i.e., the probability distribution on the $Y$), we would not expect the Condorcet paradox to emerge. Nonetheless, we will show that these ``aggregations of soft rankings'' occupy the same linear ordering polytope as ``aggregations of rankings.''

\subsection{Aggregations of Rankings and Soft Rankings}
Table~\ref{tab: condorcet paradox} used counts of $2, 1, 0$ to induce preference between labels in each row. Because this system is cardinal, the ``preference'' (the relative frequency of each label) can take on any value in $[0,1]$. For this reason, we refer to the induced preferences in each row of our tables as a ``soft ranking.'' We will now formally define both aggregations of rankings (ARs) and aggregations of soft rankings (ASRs).

\begin{definition}[Ranking]
    A ranking of $\mathcal{Y}$ is a function $A : \mathcal{Y} \by \mathcal{Y} \rightarrow \{0, 1\}$ generated by a total ordering. We use $A(y^{(i)}, y^{(j)}) = 1$ to denote preference $y^{(i)} \rightarrow y^{(j)}$ and $A(y^{(i)}, y^{(j)}) = 0$ for $y^{(i)} \leftarrow y^{(j)}$.
\end{definition}
\begin{definition}[Aggregation of Rankings (AR)]
An aggregation of rankings is specified by a set of rankings $\vec{A}$ and a corresponding weight function $\vec{\alpha} \in \simplex^{\abs{\vec{A}}}$ (indexed by $A \in \vec{A}$).
\end{definition}
\begin{definition}[Aggregate Preference]
An aggregation preference in an AR between $y^{(i)}, y^{(j)} \in \mathcal{Y}$ is defined to be
\begin{equation*}
    R_{\vec{A}, \vec{\alpha}}(y^{(i)}, y^{(j)}) \coloneq \sum_{A \in \vec{A}} \vec{\alpha}_A A(y^{(i)}, y^{(j)}).
\end{equation*}
\end{definition}
Corresponding to rankings, ARs, and aggregate preferences $R_{\vec{A}, \vec{\alpha}}$, we will have soft rankings, ASRs, and aggregate probabilities $F_{\vec{B}, \beta}$.

\begin{definition}[Soft Rankings]
    A soft ranking on $\mathcal{Y}$ is a function $B : \mathcal{Y} \by \mathcal{Y} \rightarrow [0, 1]$ generated by a categorical probability distribution on $\mathcal{Y}$, $\vec{p} \in \simplex^{\abs{\mathcal{Y}}}$.
    \begin{equation*}
        B(y^{(i)}, y^{(j)}) \coloneq \frac{p_i}{p_i + p_j}.
    \end{equation*}
\end{definition}
\begin{definition}[Aggregation of Soft Rankings (ASR)]
An aggregation of soft rankings is specified by a set of soft rankings $\vec{B}$ and a corresponding weight function $\beta \in \simplex^{\abs{\vec{B}}}$ (indexed by $B \in \vec{B}$).
\end{definition}

\begin{definition}[Aggregate Probability]
An aggregate probability in an ASR between $y^{(i)}, y^{(j)} \in \mathcal{Y}$ is
\begin{equation*}
    F_{\vec{B}, \beta}(y^{(i)}, y^{(j)}) \coloneq \sum_{B \in \vec{B}} \beta_B B(y^{(i)}, y^{(j)}).
\end{equation*}
\end{definition}

\begin{observation}\label{obs: connection}
Suppose the probability distribution for a covariate adjustment on $\vec{X}$ (e.g. $X$ in our previous examples), $\Pr(\vec{X} \given Y \in \{y^{(i)}, y^{(j)}\})$, is the same for all pairs of labels $\{y^{(i)}, y^{(j)}\}$. The treatment effects computed for each label pair then correspond to the difference between the aggregate probabilities in two ASRs with a $B \in \vec{B}$ for each assignment of $\vec{X} = \vec{x}$ and $\beta_B = \Pr(\vec{x} \given Y \in \{y^{(i)}, y^{(j)}\})$.
\end{observation}

Observation~\ref{obs: connection} illustrates the connection between ASRs and the paradox of nontransitivity developed in the previous section --- namely, that the backdoor adjustment is a weighted sum of conditional probabilities that can each be interpreted as a soft ranking. The remainder of this paper will be dedicated to showing that ARs and ASRs on the same cardinality $\abs{\mathcal{Y}}=n$ can hold the exact same vectors of weights. This gives a correspondence between the notion of ``consistency'' when combining causal quantities in omitted label contexts and aggregations of rankings in the already well-studied linear ordering polytope.

To make our statement precise, we will denote $\mathcal{A}$ as the set of $\{0,1\}^{n(n-1)}$ vectors associated with the output values of some $A$ in a total ordering and $\co(\mathcal{A})$ as its convex hull. Note that $\co(\mathcal{A})$ is the space of edge-weights for ARs, i.e., vectors with entries corresponding to $R(y^{(i)}, y^{(j)})$. Similarly, denote $\mathcal{B}$ as the set of $[0, 1]^{n(n-1)}$ vectors generated by some categorical distribution and observe that its convex hull $\co(\mathcal{B})$ is the space of possible edge-weights for ASRs.

\begin{theorem}\label{thm: equivalence}
    $\co(\mathcal{A})$ and $\cco(\mathcal{B})$ are the same.
\end{theorem} 

It is not difficult to see how soft rankings can be made ``harder'' by increasing the relative difference in probabilities. That is, replacing $2, 1, 0$ in Table~\ref{tab: condorcet paradox} with $100, 1, 0$ more closely simulates an absolute preference. Showing that any set of aggregate probabilities from an ASR can be realized with aggregate preferences from an AR is less obvious. We will prove this direction by using the probability table in an ASR to directly construct a corresponding AR.

\subsection{Probabilities can Emulate Preferences}
We will begin with the simpler direction, given by Lemma~\ref{lem:AProb to APref}.
\begin{lemma}
   \label{lem:AProb to APref}
   $\co(\mathcal{A}) \subset \cco (\mathcal{B})$.
\end{lemma}
To prove Lemma~\ref{lem:AProb to APref}, we will first show that for every $A \in \mathcal{A}$, there exists a $B \in \mathcal{B}$ that is arbitrarily close to it. We will then make use of the following more general lemma, which we prove in Appendix~\ref{apx: proof of convex hull thm}.
\begin{lemma}
\label{thm:convex_hulls}
Consider a set of vectors $\mathcal{V} = \{\vec{v}_1, \ldots, \vec{v}_t \}$ and with $\vec{v}_i \in \R^m$ for all $i$. If we have $\Vt$ such that for every $\varepsilon >0$ and $\vec{v} \in V$, there exists $\vec{\vt} \in \Vt$ such that $\twonorm{\vec{\vt} - \vec{v}} < \varepsilon$, then $\co(\mathcal{V}) \subseteq \cco(\Vt)$.
\end{lemma}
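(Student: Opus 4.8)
The plan is to show directly that every point of the open convex hull $\co(\mathcal{V})$ lies in the closure of $\co(\Vt)$, and then to note that this closure is contained in $\cco(\Vt)$ because the closed convex hull is, by definition, closed and contains $\co(\Vt)$. First I would fix an arbitrary $\vec{x} \in \co(\mathcal{V})$ and write it as a convex combination $\vec{x} = \sum_{i=1}^t \lambda_i \vec{v}_i$ with $\lambda_i \ge 0$ and $\sum_i \lambda_i = 1$ (positivity of the weights plays no role). The goal is then to produce, for every $\varepsilon > 0$, a point of $\co(\Vt)$ within distance $\varepsilon$ of $\vec{x}$.

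To build such a point I would invoke the approximation hypothesis: for each $i$ and the given $\varepsilon$, choose $\vt_i \in \Vt$ with $\twonorm{\vt_i - \vec{v}_i} < \varepsilon$ (finiteness of $\mathcal{V}$ lets me use a single $\varepsilon$ for all $i$ at once). Forming the matching combination $\vec{y} = \sum_{i=1}^t \lambda_i \vt_i$ with the \emph{same} weights $\lambda_i$, this $\vec{y}$ is a convex combination of elements of $\Vt$ and hence lies in $\co(\Vt) \subseteq \cco(\Vt)$. The triangle inequality together with the convexity of the weights then yields $\twonorm{\vec{y} - \vec{x}} = \twonorm{\sum_i \lambda_i (\vt_i - \vec{v}_i)} \le \sum_i \lambda_i \twonorm{\vt_i - \vec{v}_i} < \varepsilon \sum_i \lambda_i = \varepsilon$.

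Since $\varepsilon > 0$ was arbitrary, this exhibits $\vec{x}$ as a limit of points of $\co(\Vt)$. As $\cco(\Vt)$ is closed and contains $\co(\Vt)$, it contains every such limit, so $\vec{x} \in \cco(\Vt)$. Because $\vec{x}$ was an arbitrary point of $\co(\mathcal{V})$, we conclude $\co(\mathcal{V}) \subseteq \cco(\Vt)$, as desired.

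I do not expect a genuine obstacle here: the argument reduces to the triangle inequality plus the observation that reweighting the approximating vectors by the same coefficients $\lambda_i$ keeps the combination inside $\co(\Vt)$. The only conceptual point requiring care is the appeal to closedness — one cannot in general hope for $\co(\mathcal{V}) \subseteq \co(\Vt)$, since the approximants only converge to $\vec{x}$ rather than equalling it, which is precisely why the conclusion is stated with the closed hull $\cco(\Vt)$ on the right.
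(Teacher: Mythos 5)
Your proof is correct, and it takes a genuinely different --- and considerably more elementary --- route than the paper's. The paper argues through the halfspace representation of the two polytopes: it writes $\co(\mathcal{V}) = \{\vec{x} : A\vec{x} \prec \vec{b}\}$ and $\co(\Vt) = \{\vec{x} : \At\vec{x} \prec \vec{\bt}\}$, bounds how far each face can move under the perturbation (Lemma~\ref{lem:perturbed_boundaries}, via a Cauchy--Schwarz estimate), and derives the quantitative statement of Lemma~\ref{lem:convex_analysis}: any $\vec{x}$ with margin $A\vec{x} \prec \vec{b} - \varepsilon\onevec$ already satisfies $\At\vec{x} \prec \vec{\bt}$, i.e.\ lies inside the perturbed polytope itself, and letting $\varepsilon \to 0$ then captures every interior point. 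You instead work with the vertex representation: approximate each generator $\vec{v}_i$ by some $\vt_i \in \Vt$, keep the same convex weights $\lambda_i$, apply the triangle inequality, and finish with the closedness of $\cco(\Vt)$. Your argument is shorter, bypasses all facial/polytope machinery, and is more general (it needs no finiteness of $\Vt$ and works in any normed space). What the paper's heavier approach buys is the explicit margin condition: it shows that points at distance $\varepsilon$ from every face of $\co(\mathcal{V})$ land in the perturbed hull itself rather than merely in its closure --- a per-point, non-asymptotic conclusion that your limiting argument does not provide, although nothing in the lemma as stated, nor in its application in Lemma~\ref{lem:rsubsetG}, requires it. One small slip to fix: since the paper reserves $\co(\cdot)$ for the \emph{open} convex hull, your intermediate point $\vec{y} = \sum_i \lambda_i \vt_i$ need not lie in $\co(\Vt)$ (it can fall on the boundary); it lies in the convex hull of $\Vt$ and hence in $\cco(\Vt)$, which is all your argument actually uses.
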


We now give the proof for Lemma~\ref{lem:AProb to APref}. 
\begin{proof}
    For a given $A$ from a total ordering, we will show how to find a probability vector $\vec{p}$ that generates a $B$ with values that are arbitrarily close to the $0, 1$ values of $A$. As already alluded to, this will involve blowing up the ratios of the probabilities in $\vec{p}$.
    
    Let $y^{(0)} \rightarrow \ldots \rightarrow y^{(n-1)}$ be the ordering specified by $A$. Let the $i$th element of $\vec{p}$ be $\varepsilon^{i}/z$, where $z = \sum_{j=1}^n \varepsilon^j$ is a normalization factor so that $\vec{p}$ remains in the simplex. Notice that this assignment gives us
    \begin{equation*}
        B(y^{(i)}, y^{(j)}) = \begin{cases}
            \frac{1}{1 + \varepsilon^{j-i}} & \text{ if } i < j\\
            \frac{\varepsilon^{i - j}}{1 + \varepsilon^{i-j}} & \text{ if } i > j
        \end{cases}.
    \end{equation*}
    Setting $\varepsilon>0$ arbitrarily close to $0$ achieves
    \begin{equation*}
        B(y^{(i)}, y^{(j)}) \longrightarrow \begin{cases}
            1 & \text{ if } i < j\\
            0 & \text{ if } i > j\\
        \end{cases},
    \end{equation*}
    which means that $B(y^{(i)}, y^{(j)}) \rightarrow A(y^{(i)}, y^{(j)})$.
    Finally, we apply Lemma~\ref{thm:convex_hulls}.
\end{proof}

\subsection{Preferences can Emulate Probabilities}

We will continue with the more difficult direction, given by Lemma~\ref{lem:APref to Aprob}.
\begin{lemma}
   \label{lem:APref to Aprob}
   $\co(\mathcal{B}) \subset \co(\mathcal{A})$.
\end{lemma}

We will prove this direction by showing that every possible instance of $B$ is in $\co(\mathcal{A})$. Convexity of $\co(\mathcal{A})$ will then complete the proof.

Let $\vec{A}^{(i)} \subset \vec{A}$ denote the set of rankings for which $y^{(i)}$ is the ``first choice.'' Equivalently, $\vec{A}^{(i)}$ is defined such that we have $A(y^{(i)}, y^{(j)}) = 1$ for all $j \neq i$ and $A \in \vec{A}^{(i)}$. We extend this notation to multiple indices, with $\vec{A}^{(ij)}$ encoding $y^{(i)}$ as first choice and $y^{(j)}$ as second choice (i.e., $y^{(j)}$ is preferred over all labels other than $y^{(i)}$). If only one ranking satisfies the restriction, then we will use an unbolded $A^{(ij)}$ to emphasize that we are no longer dealing with a set.

%%%

%inductive proof
\begin{figure*}
\centering
\begin{align*}
\left(\scalebox{.55}{\begin {tikzpicture}[-latex, auto,node distance =1cm and 1.33 cm ,on grid,
semithick, baseline, anchor=base, state/.style ={ circle, draw, minimum width =.8 cm}]
\node (center) {};
\node[state] (c) [below left = of center]{$.2$};
\node (clab) [below left = .5cm and .5cm of c]{$\YY{2}$};
\node[state] (a) [above = of center]{$.5$};
\node (alab) [above = .7cm of a]{$\YY{0}$};
\node[state] (b) [below right = of center]{$.3$};
\node (blab) [below right = .5cm and .5cm of b]{$\YY{1}$};
\path (a) edge node[above right] {$\frac{5}{8}$} (b);
\path (b) edge node[below] {$\frac{3}{5}$} (c);
\path (c) edge node[above left] {$\frac{2}{7}$} (a);
\end{tikzpicture}}\right) &= .5 \left(\scalebox{.55}{\begin {tikzpicture}[-latex, auto,node distance =1cm and 1.33 cm ,on grid,
semithick, baseline, anchor=base, state/.style ={ circle, draw, minimum width =.8 cm}]
\node (center) {};
\filldraw[color = red, fill=red!4](-2.3, -.5) rectangle (2.3, -1.8); \node[color = red] at (0,-.3) {};
\node[state, color=red] (c) [below left = of center]{$.4$};
\node (clab) [below left = .5cm and .5cm of c]{$\YY{2}$};
\node[state, fill] (a) [above = of center]{$.5$};
\node (alab) [above = .7cm of a]{$\YY{0}$};
\node[state, color=red] (b) [below right = of center]{$.6$};
\node (blab) [below right = .5cm and .5cm of b]{$\YY{1}$};
\path (a) edge node[above right] {$1$} (b);
\path (b) edge[color=red] node[below] {$\frac{3}{5}$} (c);
\path (c) edge node[above left] {$0$} (a);
\end{tikzpicture}}\right) + .3 \left(\scalebox{.55}{\begin {tikzpicture}[-latex, auto,node distance =1cm and 1.33 cm ,on grid,
semithick, baseline, anchor=base, state/.style ={ circle, draw, minimum width =.8 cm}]
\node (center) {};
\node[state, color=red] (c) [below left = of center]{$\frac{2}{7}$};
\node (clab) [below left = .5cm and .5cm of c]{$\YY{2}$};
\node[state, color=red] (a) [above = of center]{$\frac{5}{7}$};
\node (alab) [above = .7cm of a]{$\YY{0}$};
\node[state, fill] (b) [below right = of center]{$.6$};
\node (blab) [below right = .5cm and .5cm of b]{$\YY{1}$};
\path (a) edge node[above right] {$0$} (b);
\path (b) edge node[below] {$1$} (c);
\path (c) edge[color=red] node[above left] {$\frac{2}{7}$} (a);
\begin{pgfonlayer}{bg}
%\filldraw[rotate around={56.5 : ( $ (a)!0.5!(c) $ )}, color = red, fill=red!4]
%($(a)!0.5!(c) - (2.2, .55)$) rectangle ($(a)!0.5!(c) + (2.2, .7)$);
\filldraw[color = red, fill=red!4]($(a) + (-.8, 0) + .5*(a)-.5*(c)$) --($(a) + (.7, 0) + .5*(a)-.5*(c)$) -- ($(c) + (.7, 0)- .4*(a)+.4*(c)$) -- ($(c) + (-.8, 0) - .4*(a)+.4*(c)$) -- cycle;
\end{pgfonlayer}
\node[color = red] at ( $ (a)!0.5!(c) + (-1, .5)$ ) {};
\end{tikzpicture}}\right) + 
.2 \left(\scalebox{.55}{\begin {tikzpicture}[-latex, auto,node distance =1cm and 1.33 cm ,on grid,
semithick, baseline, anchor=base, state/.style ={ circle, draw, minimum width =.7 cm}]
\node (center) {};
\node[state, fill] (c) [below left = of center]{$.4$};
\node (clab) [below left = .5cm and .5cm of c]{$\YY{2}$};
\node[state, color=red] (a) [above = of center]{$\frac{5}{8}$};
\node (alab) [above = .7cm of a]{$\YY{0}$};
\node[state, color=red] (b) [below right = of center]{$\frac{3}{8}$};
\node (blab) [below right = .5cm and .5cm of b]{$\YY{1}$};
\path (a) edge[color=red] node[above right] {$\frac{5}{8}$} (b);
\path (b) edge node[below] {$0$} (c);
\path (c) edge node[above left] {$1$} (a);
\begin{pgfonlayer}{bg}
%\filldraw[rotate around={56.5 : ( $ (a)!0.5!(c) $ )}, color = red, fill=red!4]
%($(a)!0.5!(c) - (2.2, .55)$) rectangle ($(a)!0.5!(c) + (2.2, .7)$);
\filldraw[color = red, fill=red!4]($(a) + (-.7, 0) + .5*(a)-.5*(b)$) --($(a) + (.8, 0) + .5*(a)-.5*(b)$) -- ($(b) + (.8, 0)- .4*(a)+.4*(b)$) -- ($(b) + (-.7, 0) - .4*(a)+.4*(b)$) -- cycle;
\end{pgfonlayer}
\node[color = red] at ( $ (a)!0.5!(b) + (1, .5)$ ) {};
\end{tikzpicture}}\right)
\end{align*}
\caption{A demonstration of the inductive step in the proof for Lemma~\ref{lem:APref to Aprob}. The weights on the LHS are the aggregate probabilities $(y^{(i)}, y^{(j)})$ that we wish to generate, while the numbers within each vertex $y^{(i)}$ specify $p_i$. The weights of the graphs on the RHS are given by Equation~\ref{eq: cases in decomp}, with adjusted (re-normalized) probabilities $p^{[k]}$ specified within the vertices. Three subgraphs are highlighted in red, which represent the smaller sets of labels which can be decomposed according to the inductive hypothesis.}
\label{fig:eg_to_lop_decomp}
\end{figure*}
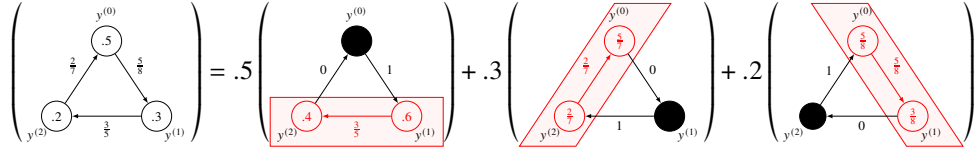

\begin{proof}
    We will induct on the number of labels $n$. The inductive hypothesis is that any $B \in \mathcal{B}$ generated by a categorical distribution $\vec{p} \in \simplex^n$ over $n$ labels can be expressed as an AR $\vec{A}$, $\vec{\alpha}$. This can easily be shown for the base case of $n=2$ by assigning $\vec{\alpha}_{A^{(01)}} = p_0$ and $\vec{\alpha}_{A^{(10)}} = p_1$.

    Now, assuming the inductive hypothesis to be correct for all $B$ on $n$ labels, we will show how to construct an AR for a $B$ on $n+1$ labels. First, expand $R_{\vec{A}, \vec{\alpha}}$, which we have not yet specified, into aggregate rankings on $\vec{A}^{(0)}, \ldots, \vec{A}^{(n)}$,
    \begin{equation} \label{eq: decomp}
        R_{\vec{A}, \vec{\alpha}} = \sum_{k=0}^{n} p_k R_{\vec{A}^{(k)}, \vec{\alpha}^{(k)}}.
    \end{equation}

     Now, choose some label $y^{(k)}$ and construct a new $\vec{p}^{[k]} \in \simplex^n$ that will represent the ``conditional'' probability table for all rankings with $y^{(k)}$ as their first choice. This is done by setting all $p^{[k]}_i = p_i/(1-p_k)$, for $i \neq k$.
     This new $\vec{p}^{[k]}$ implies a $B^{[k]}(y^{(i)}, y^{(j)})$ that matches $B(y^{(i)}, y^{(j)})$ for all $i,j \neq k$. $B^{[k]}$ also satisfies the requirements for the inductive hypothesis when leaving out $y^{(k)}$ from the list of labels, thereby reducing the length of the list of labels. We can therefore assume there is a set of rankings $\vec{A}^{[k]}$ and corresponding $\vec{\alpha}^{[k]}$ that forms an AR for which $R_{\vec{A}^{[k]}, \vec{\alpha}^{[k]}} = B^{[k]}$. 
     
     Each $A^{[k]} \in \vec{A}^{[k]}$ can now be augmented with a first-choice preference of $y^{(k)}$ to generate the set $\vec{A}^{(k)}$ with corresponding $\vec{\alpha}^{(k)} = \vec{\alpha}^{[k]}$. Using this assignment, we have that
     \begin{equation} \label{eq: cases in decomp}
         R_{\vec{A}^{(k)}, \vec{\alpha}^{(k)}}(y^{(i)}, y^{(j)}) = 
         \begin{cases}
             B(y^{(i)}, y^{(j)})  & \text{ if } i,j \neq k\\
             1 & \text{ if } i = k\\
             0 & \text{ if } j = k
         \end{cases}
     \end{equation}

     Applying Equation~\ref{eq: cases in decomp} to Equation~\ref{eq: decomp} verifies that
    \begin{equation*}
    \begin{aligned}
        R_{\vec{A}, \vec{\alpha}}(y^{(i)}, y^{(j)}) &= p_i + \sum_{k \neq i,j} p_k  B(y^{(i)}, y^{(j)}) = (p_i + p_j)B(y^{(i)}, y^{(j)}) + (1 - p_i - p_j)B(y^{(i)}, y^{(j)})\\
        &= B(y^{(i)}, y^{(j)}).
    \end{aligned}
    \end{equation*}
     Hence, our construction of $\vec{p}_i^{[k]}$ is valid. This completes the inductive proof. As stated earlier, convexity of $\co(\mathcal{A})$ gives the desired result.
\end{proof}
Figure~\ref{fig:eg_to_lop_decomp} illustrates an example inductive step in the proof for Lemma~\ref{lem:APref to Aprob}.

\section{Discussion}\label{sec:discussion}

This paper studies dataset contexts and how they affect the conclusions we take from them. Specifically, we have explored how the traditional principles of causal inference break down in omitted label contexts through the manifestation of two well-known paradoxes. These paradoxes warn practitioners about causal inference within a label-biased setting. In addition, the structures of these paradoxes (as described by Theorem~\ref{thm: equivalence}) provide insight into their nature and define a notion of consistency between partial studies.

The study of omitted label contexts may have applications in handling other forms of label bias. In the study of rare diseases, it may be unfeasible to avoid scaling up the proportion of affected individuals. This is a form of label bias and comes with all of the caveats discussed in the preceding sections. An intriguing alternative is to instead combine the label of interest with another similarly rare disease. In a dataset of two rare diseases, one can maintain the relative probabilities of the labels while keeping their portion of the dataset nontrivial. In order to extend the results of this type of study to the broader population, one can imagine computing multiple ratios of increasingly more common labels in order to compute the context-induced distributions needed to perform adjustments. Such a combination requires a deep understanding of the structure in networks of contexts.

While the linear ordering polytope has a few counterintuitive properties, such as the breakdown of preference-transitivity, it also has limitations. For example, the Condorcet paradox actually represents the ``maximum amount of nontransitivity'' that is possible in a cycle of three choices. It is not possible for a cycle of $80 \%$ preferences to exist, nor is a cycle of $100 \%, 50\%, 70\%$ possible. In fact, for any cycle of length $\ell$, the aggregate preferences along that cycle must sum to between $1$ and $\ell-1$, a property discussed in \citet{fishburn1992induced} as the ``triangle inequality'' and \citet{mazaheri2021synthesizing} as the ``curl condition.''

These limitations can be harnessed to provide bounds on unmeasured aggregate preferences, or ``missing edge weights'' as discussed in \citet{mazaheri2021synthesizing}. That is, if we are missing a study on two labels, we can provide bounds on that study's likely outcome using the outcomes of related studies. Furthermore, we can use these properties to detect inconsistencies within sets of studies.

As larger machine learning models become more costly to train and comprehensive data drifts towards private datasets, many practitioners are choosing to re-purpose pre-trained ``checkpoints'' to new tasks. Of course, these checkpoints all come from data-contexts. Hence, we see decision fusion from different contexts as an essential upcoming field to launch data science into a new age.

% Acknowledgments---Will not appear in anonymized version
\acks{The authors thank the Paradise Lab at Caltech. Bijan Mazaheri was supported by a National Science Foundation Graduate Research Fellowship and an Amazon AI4Science fellowship.}

\bibliography{refs}

\appendix

\section{Proof of Lemma~\ref{thm:convex_hulls}}
\label{apx: proof of convex hull thm}

Convex hulls of finite sets in $\R^\ell$ are \emph{convex} polytopes, which can be expressed as an intersection of $h$ halfspaces indexed by $f$ with $\{\vec{x} : \vec{a^{(f)}}^\top \vec{x} x< b^{(f)}\}$ \citep{grunbaum1967convex}. Vectors $\vec{a^{(f)}}^\top$ can be combined as row-vectors of a matrix, $A$, so that any convex polytope can be expressed as
\begin{equation}
\label{eq:polytopes}
 \{x : A\vec{x} \prec \vec{b} \} = \left\{\vec{x} : \begin{pmatrix} (\vec{a^{(1)}})^\top\\ \vdots \\ (\vec{a^{(h)}})^\top\end{pmatrix}\vec{x} \prec \begin{pmatrix}b^{(1)}\\ \vdots \\ b^{(h)} \end{pmatrix} \right\}.
\end{equation}
For convenience, the vectors $\vec{a^{(f)}}, \vec{\at^{(f)}}$ are assumed to be unit vectors throughout.

The idea behind the proof will be to analyze the movement of the boundaries of the polytope defined by $\mathcal{V} = \{\vec{v}_1, \ldots, \vec{v}_m \}$ and corresponding polytope defined by the ``perturbed points'' $\Vt = \{\vt_1, \ldots \vt_m \}$. The key is to show that a point that is far enough from the boundary of $\co(\mathcal{V}$ will also be within $\co(\Vt)$, given by Lemma~\ref{lem:convex_analysis}. This required distance from the boundary will be relative to the amount by which the perturbed points have moved. As we make the perturbation arbitrarily small (i.e. $\varepsilon \rightarrow 0$, all points in the interior of the polytope will be included.
\begin{lemma}
\label{lem:convex_analysis}
Let
\begin{align*}
    \co(\mathcal{V}) &= \{\vec{x} : A\vec{x} \prec \vec{b}\}\\
    \co(\Vt) &= \{\vec{x} : \At \vec{x} \prec \vec{\bt}\}
\end{align*}
as given by Equation~\ref{eq:polytopes}. If $A \vec{x} \prec \vec{b} - \varepsilon \onevec$ and $\twonorm{\vec{v_i} - \vec{\vt_i}} < \varepsilon\; \forall i$, then $\At \vec{x} \prec \vec{\bt}$.
\end{lemma}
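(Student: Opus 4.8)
The plan is to avoid manipulating the matrices $A$ and $\At$ directly and instead pass through the support functions of the two polytopes, using the standard fact that for a convex hull of finitely many points the maximum of any linear functional is attained at one of the generating points. Concretely, for a unit direction $\vec{a}$ the quantity $\max_i \vec{a}^\top\vec{v}_i$ is the supporting value of $\cco(\mathcal{V})$ in direction $\vec{a}$, and in the halfspace description of Equation~\ref{eq:polytopes} the offset attached to a facet with unit normal $\vec{a}$ is exactly this value; the same holds for $\Vt$, $\At$, and $\vec{\bt}$. The perturbation hypothesis $\twonorm{\vec{v}_i - \vt_i} < \varepsilon$ then becomes, after Cauchy--Schwarz, a uniform $\varepsilon$-bound between these two support functions in \emph{every} direction, and this is the bridge linking the facets of $\co(\mathcal{V})$ (controlled by the hypothesis) to the facets of $\co(\Vt)$ (appearing in the conclusion).

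First I would reinterpret the hypothesis $A\vec{x} \prec \vec{b} - \varepsilon\onevec$ geometrically: for the facet with unit normal $\vec{a}$ and offset $b$ it reads $\vec{a}^\top\vec{x} < b - \varepsilon$, so for any $\vec{\delta}$ with $\twonorm{\vec{\delta}} \le \varepsilon$ Cauchy--Schwarz gives $\vec{a}^\top(\vec{x}+\vec{\delta}) \le \vec{a}^\top\vec{x} + \varepsilon < b$; thus the entire closed ball of radius $\varepsilon$ about $\vec{x}$ stays inside $\co(\mathcal{V}) \subseteq \cco(\mathcal{V})$. Next, to prove $\At\vec{x} \prec \vec{\bt}$ it suffices to verify one facet at a time, so fix a row $\vec{\at}$ of $\At$ (a unit vector) with offset $\bt = \max_i \vec{\at}^\top\vt_i$. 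Feeding the point $\vec{x} + \varepsilon\vec{\at}$ of that ball into the support function of $\cco(\mathcal{V})$ and using $\vec{\at}^\top\vec{\at}=1$ gives $\vec{\at}^\top\vec{x} + \varepsilon \le \max_i \vec{\at}^\top\vec{v}_i$. On the other hand, Cauchy--Schwarz applied termwise gives $\vec{\at}^\top\vec{v}_i < \vec{\at}^\top\vt_i + \varepsilon \le \bt + \varepsilon$ for every $i$, hence $\max_i \vec{\at}^\top\vec{v}_i < \bt + \varepsilon$. Chaining the two inequalities yields $\vec{\at}^\top\vec{x} \le \max_i \vec{\at}^\top\vec{v}_i - \varepsilon < \bt$, the required strict inequality; letting $\vec{\at}$ range over all rows of $\At$ gives $\At\vec{x} \prec \vec{\bt}$.

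The step I expect to be the crux is not any single estimate but the conceptual move itself: recognizing that the hypothesis and the conclusion speak about facets of two \emph{different} polytopes, and that the only clean way to connect them is through the common language of support functions evaluated in a shared direction $\vec{\at}$, rather than trying to compare the matrices $A$ and $\At$ row by row. Once that bridge is in place the remainder is just two applications of Cauchy--Schwarz. The one thing to watch is strictness: since the statement is phrased with $\prec$, I must ensure both that the closed $\varepsilon$-ball genuinely lands where $A\vec{y}\prec\vec{b}$ holds strictly and that the concluding inequality $\vec{\at}^\top\vec{x} < \bt$ emerges strict rather than merely $\le$, which it does precisely because the perturbation bound $\twonorm{\vec{v}_i - \vt_i}<\varepsilon$ is itself strict.
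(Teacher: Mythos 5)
Your proposal is correct, and it takes a genuinely different---and leaner---route than the paper. The paper proves the lemma in two stages: it first defines the faces $W^{(f)}$ of $\co(\mathcal{V})$ and their perturbed counterparts $\Wt^{(f)}$, proves an auxiliary result (Lemma~\ref{lem:perturbed_boundaries}) bounding $(\vec{\at^{(g)}})^\top \vec{m^{(f)}}$ for points $\vec{m^{(f)}}$ on a face, and then ray-shoots from $\vec{x}$ along $\vec{\at^{(g)}}$ until the ray hits the boundary at $\vec{m^{(f)}_x} = \vec{x} + \beta\vec{\at^{(g)}}$, separately establishing $\beta > \varepsilon$ before invoking the face lemma. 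You bypass the face machinery entirely: the hypothesis $A\vec{x} \prec \vec{b} - \varepsilon\onevec$ plus Cauchy--Schwarz puts the whole $\varepsilon$-ball around $\vec{x}$ inside $\cco(\mathcal{V})$, so the specific point $\vec{x} + \varepsilon\vec{\at^{(g)}}$ can be tested against the support function $\max_i (\vec{\at^{(g)}})^\top\vec{v_i}$ of $\cco(\mathcal{V})$, which in turn is within $\varepsilon$ of $\max_i (\vec{\at^{(g)}})^\top\vec{\vt_i} \leq \bt^{(g)}$ by a second application of Cauchy--Schwarz; chaining gives the strict inequality. Note that your argument only needs the inequality $\max_i (\vec{\at^{(g)}})^\top\vec{\vt_i} \leq \bt^{(g)}$ (i.e., that the generators of $\Vt$ satisfy each halfspace constraint), not the full support-function equality, so it is robust even to redundant rows in $\At$. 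What your route buys: no auxiliary lemma, no need to identify which face the ray hits or to argue $\vec{m^{(f)}_x} \in \co(W^{(f)})$ (a step in the paper with unaddressed edge cases, e.g.\ when the ray exits through a lower-dimensional face), and no separate bound on the travel distance $\beta$. What the paper's route buys is mainly expository: it makes explicit how each facet of the polytope moves under perturbation, which is the geometric picture the authors emphasize in the surrounding text. The underlying trick is shared---translate $\vec{x}$ by $\varepsilon$ in the direction of the new facet normal and exploit that all normals are unit vectors---but your realization of it is shorter and self-contained.
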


To prove Lemma~\ref{lem:convex_analysis}, we will need to show that the boundaries of the polytopes do not move too much. We will do this using Lemma~\ref{lem:perturbed_boundaries}, which bounds how far $\bo(\co(V))$ can be from $\bo(\co(\Vt))$ along a single ``face.'' 

\begin{definition}
\label{definition:arbitrary_face}
Choose $f \in [h]$. Define:
\begin{align*}
    W^{(f)} &= \{\vec{w} : (\vec{a^{(f)}})^\top w = b^{(f)}, \vec{w} \in V\}\\
    \Wt^{(f)} &= \{\vec{\vt_i} : \vec{v_i} \in W^{(f)}\}
\end{align*}
We restrict the size of $\abs{W^{(f)}} = \ell$, which is the number of points needed to define a halfspace in $\R^\ell$. This can be done by allowing for multiple identical $\vec{a_f}, b_f$ combinations corresponding to all size $\ell$ subsets of the $v_i$ along the boundary. 

Note that $\co(W^{(f)})$ describes a ``face'' of the polytope $\co(\mathcal{V})$ indexed by $f$ which is perpendicular to $\vec{a^{(f)}}$. $\co(\Wt^{(f)})$ describes the perturbed face.
\end{definition}

\begin{lemma}
\label{lem:perturbed_boundaries}
Choose $f, g \in [h]$ arbitrarily and let $W^{(f)} = \{\vec{w^{(f)}_1}, \ldots, \vec{w^{(f)}_\ell}\}$ and $\Wt^{(f)} = \{\vec{\wt^{(f)}_1}, \ldots, \vec{\wt^{(f)}_\ell}\}$.  
For every $\vec{m^{(f)}} \in \cco(W^{(f)})$, we have $(\vec{\at^{(g)}})^\top \vec{m^{(f)}} < \bt^{(g)} + \varepsilon$.
\end{lemma}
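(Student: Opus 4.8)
The plan is to transport the point $\vec{m^{(f)}}$ across the perturbation using a fixed barycentric representation, and then exploit the fact that the perturbed face is contained in the perturbed polytope $\co(\Vt)$. First I would invoke $\vec{m^{(f)}} \in \cco(W^{(f)})$ to write it as a convex combination of the defining vertices, $\vec{m^{(f)}} = \sum_{i=1}^{\ell} \lambda_i \vec{w^{(f)}_i}$ with $\lambda_i \geq 0$ and $\sum_{i} \lambda_i = 1$. Then I would form the companion point obtained by keeping the \emph{same} coefficients but substituting the perturbed vertices, $\vec{\mt^{(f)}} = \sum_{i=1}^{\ell} \lambda_i \vec{\wt^{(f)}_i}$.

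The crucial observation is that $\vec{\mt^{(f)}}$ is a convex combination of points of $\Vt$, so $\vec{\mt^{(f)}} \in \cco(\Vt)$. Since $\co(\Vt) = \{\vec{x} : \At\vec{x} \prec \vec{\bt}\}$, its closure satisfies $\At\vec{x} \preceq \vec{\bt}$, and in particular the $g$-th constraint gives $(\vec{\at^{(g)}})^\top \vec{\mt^{(f)}} \leq \bt^{(g)}$. This membership is exactly what lets the perturbed constraint control the transported point, and it is why reusing the same weights $\lambda_i$ (rather than re-expanding $\vec{\mt^{(f)}}$ in some other basis) is the right move.

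It then remains to compare $\vec{m^{(f)}}$ with $\vec{\mt^{(f)}}$. Writing $(\vec{\at^{(g)}})^\top \vec{m^{(f)}} = (\vec{\at^{(g)}})^\top \vec{\mt^{(f)}} + (\vec{\at^{(g)}})^\top(\vec{m^{(f)}} - \vec{\mt^{(f)}})$, I would bound the last term. Because $\vec{\at^{(g)}}$ is a unit vector, Cauchy--Schwarz gives $(\vec{\at^{(g)}})^\top(\vec{m^{(f)}} - \vec{\mt^{(f)}}) \leq \twonorm{\vec{m^{(f)}} - \vec{\mt^{(f)}}}$, and since the two points share the same convex weights, the triangle inequality yields $\twonorm{\vec{m^{(f)}} - \vec{\mt^{(f)}}} \leq \sum_{i=1}^{\ell} \lambda_i \twonorm{\vec{w^{(f)}_i} - \vec{\wt^{(f)}_i}} < \sum_{i} \lambda_i \varepsilon = \varepsilon$, using the per-vertex hypothesis $\twonorm{\vec{v_i} - \vec{\vt_i}} < \varepsilon$ (each $\vec{w^{(f)}_i}$ being some $\vec{v_j}$) together with $\sum_{i} \lambda_i = 1$. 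Combining the two bounds gives $(\vec{\at^{(g)}})^\top \vec{m^{(f)}} < \bt^{(g)} + \varepsilon$, as claimed.

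I expect the only real subtlety, rather than a genuine obstacle, to be the bookkeeping of reusing the same barycentric coefficients for the perturbed vertices: this single choice simultaneously places $\vec{\mt^{(f)}}$ inside $\co(\Vt)$ and keeps it within $\varepsilon$ of $\vec{m^{(f)}}$. Everything else is a one-line application of Cauchy--Schwarz and the triangle inequality, so no convex-geometry input beyond the halfspace description recorded in Equation~\ref{eq:polytopes} and the unit-vector normalization of the $\vec{\at^{(g)}}$ should be needed.
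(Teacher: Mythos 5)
Your proposal is correct and is essentially identical to the paper's own proof: you use the same barycentric transport $\vec{\mt^{(f)}} = \sum_i \lambda_i \vec{\wt^{(f)}_i}$ with reused coefficients, the same containment $\cco(\Wt^{(f)}) \subseteq \cco(\Vt)$ to get $(\vec{\at^{(g)}})^\top \vec{\mt^{(f)}} \leq \bt^{(g)}$, and the same triangle-inequality-plus-Cauchy--Schwarz bound on the displacement. No differences worth noting.
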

\begin{proof}
Because $m \in \cco(W^{(f)})$, there is some $\lambda \in \triangle_\ell$ with
\begin{equation}
    \vec{m^{(f)}} = \sum_{i=1}^{\ell} \lambda_i \vec{w^{(f)}_i} \in \cco(W^{(f)})
\end{equation}
Consider also
\begin{equation}
    \vec{\mt^{(f)}} = \sum_{i=1}^{\ell} \lambda_i \vec{\wt^{(f)}_i} \in \cco(\Wt^{(f)})
\end{equation}
Note that the norm of the difference between these two vectors is bounded:
\begin{equation}
\begin{aligned}
    \twonorm{\vec{m^{(f)}} - \vec{\mt^{(f)}}} &= \twonorm{\sum_{i=1}^{\ell} \lambda_i (\vec{w^{(f)}_i} - \vec{\wt^{(f)}_i})} \\
    &\leq \sum_{i=1}^{\ell} \lambda_i \underbrace{\twonorm{\vec{w^{(f)}_i} - \vec{\wt^{(f)}_i}}}_{< \varepsilon} < \varepsilon
\end{aligned}
\end{equation}
Also note that because $\vec{\mt^{(f)}}\in \cco(\Wt^{(f)}) \subseteq \cco(\Vt)$, we have that $(\vec{\at^{(g)}})^\top \vec{\mt^{(f)}} \leq \bt^{(g)}$. Now, a simple application of Cauchy-Schwartz gives:
\begin{equation}
    \begin{aligned}
        (\vec{\at^{(g)}})^\top \vec{m^{(f)}} &= (\vec{\at^{(g)}})^\top(\vec{\mt^{(f)}} + (\vec{m^{(f)}} - \vec{\mt^{(f)}}))\\
        &= \underbrace{(\vec{\at^{(g)}})^\top \vec{\mt^{(f)}}}_{\leq \bt^{(g)}} + (\vec{\at^{(g)}})^\top (\vec{m^{(f)}} - \vec{\mt^{(f)}})\\
        &\leq \bt^{(g)} + \twonorm{\vec{\at^{(g)}}}\twonorm{\vec{m^{(f)}} - \vec{\mt^{(f)}}}\\
        &< \bt^{(g)} + \varepsilon
    \end{aligned}
\end{equation}
\end{proof}

With this, we are now ready to prove Lemma~\ref{lem:convex_analysis}.
\begin{proof}
Choose an arbitrary face $g \in [h]$. Recall we have $\vec{x} \in \co(V)$ with $(\vec{a^{(g)}})^\top \vec{x} < b - \varepsilon$ and we wish to show $(\vec{\at^{(g)}})^\top \vec{x} < \bt^{(g)}$.

Let $\vec{m^{(f)}_x}$ be the result of extending $\vec{\at^{(g)}}$ from $\vec{x}$ to $\bo(V)$. This must hit some face with $(\vec{a^{(f)}})^\top \vec{m^{(f)}_x} = b^{(f)}$, so $\vec{m^{(f)}_x} \in \co(W^{(f)})$. That is, find $\beta$ such that
\begin{equation}
        \vec{m^{(f)}_x} = \beta \vec{\at^{(g)}} + \vec{x} \in \co(W^{(f)})
\end{equation}

First, lets bound $\beta$. Notice that because $\vec{m^{(f)}_x} \in \co(W^{(f)})$, we have
\begin{equation}
\begin{aligned}
    (\vec{a^{(f)}})^\top\vec{m^{(f)}_x} &= (\vec{a^{(f)}})^\top \left( \sum_{i=1}^{\ell} \lambda_i \vec{w^{(f)}_i}\right) \\
    &= \sum_{i=1}^{\ell} \lambda_i(\vec{a^{(f)}})^\top \vec{w^{(f)}_i} = b^{(f)}
\end{aligned}
\end{equation}
So, we have
\begin{equation}
    b^{(f)} = (\vec{a^{(f)}})^\top\vec{m^{(f)}_x} = \beta \underbrace{(\vec{a^{(f)}})^\top \vec{\at^{(g)}}}_{\leq 1} + \underbrace{(\vec{a^{(f)}})^\top \vec{x}}_{< b^{(f)} - \varepsilon} \Rightarrow \varepsilon < \beta
\end{equation}
Now, apply Lemma~\ref{lem:perturbed_boundaries}
\begin{equation}
    \begin{aligned}
        (\vec{\at^{(g)}})^\top \vec{m^{(f)}_x} &< \bt^{(g)} + \varepsilon\\
        (\vec{\at^{(g)}})^\top \vec{x} + (\vec{\at^{(g)}})^\top \vec{\at^{(g)}} \beta &< \bt^{(g)} + \varepsilon\\
        (\vec{\at^{(g)}})^\top \vec{x} &< \bt^{(g)}.
    \end{aligned}
\end{equation}
Face $g \in [h]$ was chosen arbitrarily, so this holds for all half-spaces in the convex polytope. Hence, we have $A \vec{x} \prec \vec{b}$.
\end{proof}

\section{Counts that Follow the Correct DAG}
\label{apx: details of causality paradox}

The paradox presented in Table~\ref{tab: condorcet paradox} used counts yielding a distribution that does not precisely follow the given DAG. We will now show how to construct a similar set of counts with the statistics needed to imply the causal structure.

Consider Table~\ref{tab:nontransitivity for causality more precise}. The structure is copied from Table~\ref{tab: condorcet paradox}.
If $\alpha_1 = \alpha_2$, $\beta_1 = \beta_2$, and $\gamma_1 = \gamma_2$, then we notice that the relative probabilities of $\Pr(x)$ are given by the $\alpha$s, $\beta$s, and $\gamma$s. If these coefficients are all equal, then we have every row considered with equal weight, as in the main paper.

While setting all of the Greek coefficients to $1$ provides a nice intuition for how the paradox emerges, it does not give a distribution that obeys the requirements of the given DAG. In order for our distribution to (1) factorize according to the DAG and (2) be faithful to the DAG, we must have the following properties:
\begin{enumerate}
    \item $T \not \indep Y$
    \item $X \not \indep Y$
    \item $T \not \indep X$
    \item $T \not \indep Y \given X$
    \item $X \not \indep Y \given T$
    \item $T \not \indep X \given Y$
\end{enumerate}

With all of the Greek coefficients set to $1$, we notice that conditions 5,6 are met. The domain expertise setting effectively conditions on $Y$ by restricting its values. When restricted to two columns, we also meet condition 4.

The remaining conditions (as well as condition 4 in the broader case) can be met by varying the Greek coefficients. A Jupyter notebook available \href{https://github.com/honeybijan/causal_paradoxes_sup}{here} allows one to explore different settings to the Greek coefficients to achieve this paradox. One example is $\alpha_1 = \beta_1 = \gamma_1 = 1$ and $\alpha_2 = 1.1$, $\beta_2 = 1.2$, $\gamma_2 = 1.3$. The code returns whether the 6 independence conditions hold. The three ATEs for each omitted label context are then printed. All code was run on a Macbook Air with an M1 processor.

\begin{table}
\centering
\begin{tabular}{|c|c|c|c|c|}
        \hline
        $T$ & $X$ & $y^{(0)}$ & $y^{(1)}$ & $y^{(2)}$\\
        \hline
        $t^{(1)}$ & $x^{(1)}$ & $2\alpha_1 $ & $ \alpha_1$ & $ 0 $\\
        \hline
        $t^{(1)}$ & $x^{(2)}$ & $0$ & $2\beta_1$ & $  \beta_1$ \\
        \hline
        $t^{(1)}$ & $x^{(3)}$ & $ \gamma_1$ & $0 $ & $2 \gamma_1$ \\
        \hline
        $t^{(0)}$ & $x^{(1)}$ & $0$ & $\alpha_2 $ & $2 \alpha_2$\\
        \hline
        $t^{(0)}$ & $x^{(2)}$ & $ 2 \beta_2$ & $0$ & $\beta_2$ \\
        \hline
        $t^{(0)}$ & $x^{(3)}$ & $\gamma_2$ & $2 \gamma_2$ & $0$ \\
        \hline
\end{tabular}
\caption{A more general specification of counts for our paradox.}
    \label{tab:nontransitivity for causality more precise}
\end{table}

\end{document}